\documentclass[letterpaper]{article}
\usepackage{aaai17}
\usepackage{times}
\usepackage{helvet}
\usepackage{courier}
\frenchspacing
\setlength{\pdfpagewidth}{8.5in}
\setlength{\pdfpageheight}{11in}

\usepackage[utf8]{inputenc} 
\usepackage[T1]{fontenc}    
\usepackage{url}            
\usepackage{booktabs}       
\usepackage{amsfonts}       
\usepackage{nicefrac}       

\usepackage{amsfonts}
\usepackage{amsmath}
\usepackage{wrapfig}

\usepackage{scabby}

\usepackage{algorithm}
\usepackage[noend]{algcompatible}

\usepackage{graphicx}
\usepackage{subfigure}
\usepackage{caption}



%

\setcounter{secnumdepth}{0}  
%

\title{Estimating Uncertainty Online Against an Adversary}
\author{
  Volodymyr Kuleshov \\
  Stanford University\\
  Stanford, CA 94305 \\
  \texttt{kuleshov@cs.stanford.edu} \\
  \And
  Stefano Ermon \\
  Stanford University \\
  Stanford, CA 94305 \\
  \texttt{ermon@cs.stanford.edu} 
}

\providecommand{\e}{\epsilon}

\newcommand{\ellavg}{{\bar \ell}}

\newcommand{\Ic}{{\mathcal I}}

\newcommand{\Fc}{{\mathcal F}}

\newcommand{\Rb}{{\mathbb R}}
\newcommand{\extR}{R^\mathrm{ext}}
\newcommand{\intR}{R^\mathrm{int}}
\newcommand{\Fcal}{F^\mathrm{cal}}
\newcommand{\Ind}{{\mathbb{I}}}
\newcommand{\Exp}{{\mathbb{E}}}

\newcommand{\palg}{p^F}
\newcommand{\wsupj}{\Ind^{(j)}}

\newcommand{\rjt}{{\rho^{(j)}_T}}

\newtheorem{defn}{Definition}

\newcommand\regmin{{\sc RegMin}}

\begin{document}

\maketitle

\newcommand{\fix}{\marginpar{FIX}}
\newcommand{\new}{\marginpar{NEW}}

\begin{abstract}
\begin{quote}

Assessing uncertainty is an important step towards ensuring the safety and reliability of machine learning systems.
Existing uncertainty estimation techniques may fail when their modeling assumptions are not met, e.g.~when the data distribution differs from the one seen at training time.
%
%
%
Here, we propose techniques that 
assess a classification algorithm's uncertainty via calibrated probabilities 
(i.e.~probabilities that match empirical outcome frequencies in the long run) and which are guaranteed to be reliable (i.e.~accurate and calibrated) on out-of-distribution input, including input generated by an adversary. 
This represents an extension of classical online learning that handles uncertainty in addition to guaranteeing accuracy under adversarial assumptions.
%
We establish formal guarantees for our methods, and we validate them on two real-world problems: question answering and medical diagnosis from genomic data. 
\end{quote}
\end{abstract}

\section{Introduction}\label{sec:introduction}

Assessing uncertainty is an important step towards ensuring the safety and reliability of machine learning systems.
In many applications of machine learning --- 
including medical diagnosis \cite{jiang2012calibrating}, natural language understanding \cite{nguyen2015posterior}, 
and speech recognition \cite{dong2011calibration} 
--- assessing confidence can be as important as obtaining high accuracy.
This work explores confidence estimation for classification problems.


An important limitation of existing methods is the assumption that data is sampled i.i.d.~from a distribution $\BP(x,y)$; when test-time data is distributed according to a different $\BP^*$, these methods may become overconfident and erroneous.
Here, we introduce new, robust uncertainty estimation algorithms guaranteed to produce reliable confidence estimates on out-of-distribution input, including input generated by an adversary. 

In the classification setting,
the most natural way of measuring an algorithm's uncertainty is 
via {\em calibrated} probability estimates that match the true empirical frequencies of an outcome. For example, if an algorithm predicted a 60\% chance of rain 100 times in a given year, its forecast would be calibrated if it rained on about 60 of those 100 days. 

\paragraph{Background.}

Calibrated confidence estimates are typically constructed via {\em recalibration}, using methods such as Platt scaling \cite{platt1999probabilistic} or isotonic regression \cite{niculescu2005predicting}. In the context of binary classification, these methods reduce recalibration to a one-dimensional regression problem that, given data $(x_i, y_i)_{i=1}^n$, trains a model $g(s)$ (e.g.~logistic regression) to predict probabilities $p_i = g(s_i)$ from uncalibrated scores $s_i = h(x_i)$ produced by a classifier $h$ (e.g. ~SVM margins). Fitting $g$ is equivalent to performing density estimation targeting $\BP(Y=1 | h(X) = s_i)$ and hence may fail on out-of-distribution testing data.

The methods we introduce in this work are instead based on calibration techniques developed in the literature on online learning in mathematical games \cite{foster98asymptoticcalibration,abernethy11blackwell}. 
These {\em classical} methods are not suitable for standard prediction tasks in their current form. For one, they do not admit covariates $x_i$ that might be available to improve the prediction of $y_i$; hence, they also do not consider the predictive power of the forecasts.
For example, predicting 0.5 on a sequence 01010... formed by alternating 0s and 1s is considered a valid calibrated forecaster. The algorithms we present here combine the advantages of online calibration (adversarial assumptions), and of batch probability recalibration (covariates and forecast sharpness). 

\paragraph{Online learning with uncertainty.}

Whereas classical online optimization aims to accurately predict targets $y$ given $x$ (via a convex loss $\ell(x,y)$), our algorithms aim to accurately predict {\em uncertainties} $p(y=\hat y)$.
The $p$ here are defined as empirical frequencies over data seen so far; it turns out that these probability-like quantities  can be estimated under the standard adversarial assumptions of online learning.
We thus see our work as extending classical online optimization to handle uncertainty in addition to guaranteeing accuracy.

\paragraph{Example.}

As a concrete motivating example, consider a medical system that diagnoses a long stream of patients indexed by $t=1,2,...$, outputting a disease risk $p_t \in [0,1]$ for each patient based on their medical record $x_t$. 
Provably calibrated probabilities in this setting may be helpful for making informed policy decisions (e.g.~by providing guaranteed upper bounds on the number of patients readmitted after a discharge) and may be used to communicate risks to patients in a more intuitive way. This setting is also inherently online, since patients are typically observed one at a time, and may not be i.i.d.~due to e.g., seasonal disease outbreaks.
\paragraph{Contributions.} More formally, our contributions are to:

\begin{itemize}
\item Formulate a new problem called {\em online recalibration}, 
which requires producing calibrated probabilities on potentially adversarial input, while retaining the predictive power of a given baseline uncalibrated forecaster.

\item Propose a meta-algorithm for online recalibration that uses classical online calibration as a black box subroutine.

\item Show that our technique can recalibrate the forecasts of any existing classifier at the cost of an $O(1/\sqrt{\e})$ overhead in the convergence rate of $\mathcal{A}$, where $\e > 0$ is the desired level of accuracy. 

\item Surprisingly, both online and standard batch recalibration (e.g., Platt scaling) may be performed only when accuracy is measured using specific loss functions; our work characterizes the losses which admit a recalibration procedure in both the online and batch settings.
\end{itemize}

\section{Background}\label{sec:background}

Below, we will use $\Ind_E$ denote the indicator function of $E$, $[N]$ and $[N]_0$ to (respectively) denote the sets $\{1,2,...,N\}$ and $\{0,1,2,...,N\}$, and $\Delta_d$ to denote the $d$-dimensional simplex.

\subsection{Learning with Expert Advice}\label{sec:advice}

Learning with expert advice \cite{cesabianchi2006prediction} is a special case of the general online optimization framework \cite{shalev2007phd} that underlies online calibration algorithms.
At each time $t=1,2,...$, the forecaster $F$ receives advice from $N$ {\em experts} and chooses a distribution $w_t \in \Delta_{N-1}$ over their advice. Nature then reveals an outcome $y_t$ and $F$ incurs an expected loss of $\sum_{i=1}^N w_{ti} \ell(y_t, a_{it})$, where $\ell(y_t, a_{it})$ is the loss under expert $i$'s advice $a_{it}$. Performance in this setting is measured using two notions of regret.
%
%
\begin{defn}
The external regret $\extR_T$ and the internal regret $\intR_T$ are defined as
\begin{align*}
\extR_T & = \sum_{t=1}^T \ellavg(y_t, p_t)  - \min_{i \in [N]} \sum_{t=1}^T \ell(y_t, a_{it}) \\
\intR_T & = \max_{i,j \in [N]} \sum_{t=1}^T p_{t,i} \left( \ell(y_t, a_{it})  - \ell(y_t, a_{jt}) \right),
\end{align*}
where $ \ellavg(y, p) = \sum_{i=1}^N p_i \ell(y, a_{it}) $ is the expected loss.
\end{defn}

External regret measures loss with respect to the best fixed expert, while internal regret is a stronger notion
that measures the gain from retrospectively switching all the plays of action $i$ to $j$.
Both definitions admit algorithms with sublinear, uniformly bounded regret.

In this paper, we will be particularly interested in {\em proper} losses $\ell$, whose expectation over $y$ is minimized by the probability corresponding to the average $y$.
\begin{defn}
A loss $\ell(y, p) : \{0,1\} \times [0,1]  \to \mathbb{R}_+$ is proper if
$p \in \arg\min_q \Exp_{y \sim \text{Ber}(p)} \ell(y, q) \; \forall p.$ 
\end{defn}
Examples of proper losses include the L2 loss $\ell_2(y,p) = (y-p)^2$,
the log-loss $\ell_\text{log}(y,p) = y\log(p) + (1-y)\log(1-p)$,
and the 
the misclassification loss $\ell_\text{mc}(y,p) = (1-y) \Ind_{p < 0.5} + y \Ind_{p \geq 0.5}$. Counter-examples include the L1 and the hinge losses.

\begin{figure}
\begin{center}
\includegraphics[width=8.5cm]{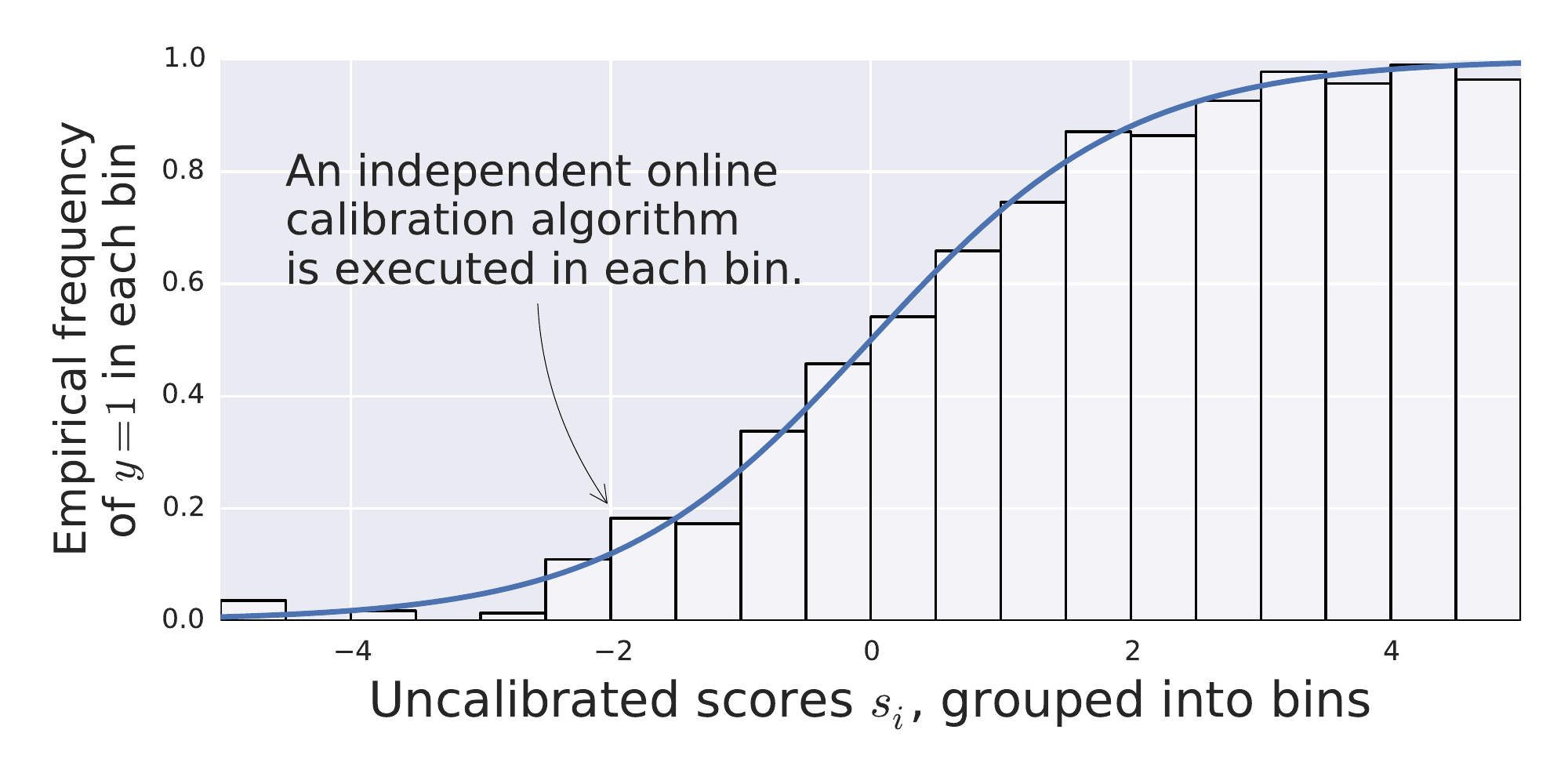}
\end{center}
\caption{Our method bins uncalibrated scores and runs online calibration subroutines in each bin (not unlike  the histogram recalibration method targeting $\mathbb{P}(y=1\mid s=t)$).}
\end{figure}

\subsection{Calibration in Online Learning}

Intuitively, calibration means that the true and predicted frequencies of an event should match. For example, if an algorithm predicts a 60\% chance of rain 100 times in a given year, then we should see rain on about 60 of those 100 days.
%
More formally, let $\Fcal$ be a forecaster making predictions in the set $\{\frac{i}{N} \mid i=0,...,N\}$,  where $1/N$ is called the {\em resolution} of $\Fcal$;
consider the quantities $\rho_T(p) = \dfrac{\sum_{t=1}^T y_t \Ind_{p_t = p}}{\sum_{t=1}^T \Ind_{p_t = p}}$ and
\begin{align}
C_T^p & = \sum_{i=0}^N \left| \rho_T(i/N) - \frac{i}{N} \right|^p \left( \frac{1}{T} \sum_{t=1}^T \Ind_{\{p_t = \frac{i}{N}\}} \right). \label{eqn:cal_loss}
\end{align}
The term $\rho_T(p)$ denotes the frequency at which event $y = 1$ occurred over the times when we predicted $p$. Our intuition was that $\rho_T(p)$ and $p$ should be close to each other; we capture this using the notion of calibration error $C_T^p$
for $p \geq 1$; this corresponds to the weighted $\ell_p$ distance between the $\rho_T(i/N)$ and the predicted probabilities $\frac{i}{N}$; typically one assumes that $p=1$ or $p=2$. To simplify notation, we will use the term $C_T$ when the exact $p$ is unambiguous.

\begin{defn}
We say that $\Fcal$ is an $(\e, \ell_p)$-calibrated algorithm with resolution $1/N$ if
$ \lim \sup_{T \to \infty} C_{T}^p \leq \e $ a.s.
\end{defn}




There exists a vast literature on calibration in the online setting \cite{cesabianchi2006prediction} which is primarily concerned with constructing calibrated predictions $p_t \in [0,1]$ of a binary outcome $y_t \in \{0,1\}$ based solely on the past sequence $y_1,...,y_{t-1}$.
Surprisingly, this is possible even when the $y_t$ are chosen adversarially by reducing the problem to
internal regret minimization relative to $N+1$ experts with losses $(y_t - i/N)^2$ and proposed predictions $i/N$ for $i \in [N]_0$. 
All such algorithms are randomized, hence our results will hold almost surely (a.s.).
See Chapter 4 in \citeauthor{cesabianchi2006prediction} for details.


\section{Online Recalibration}\label{sec:recalibration}


Unfortunately, existing online calibration methods are not directly applicable in real-world settings.
For one, they do not take into account covariates $x_t$ that might be available to improve the prediction of $y_t$. As a consequence, they cannot produce accurate forecasts: for example, they would constantly predict 0.5 on a sequence 01010... formed by alternating 0s and 1s.

To address these shortcomings, we define here a new problem called {\em online recalibration}, in which the task is to 
transform a sequence of uncalibrated forecasts $\palg_t$ into predictions $p_t$ that are calibrated and almost as accurate as the original $\palg_t$. The forecasts $\palg_t$ may come from any existing machine learning system $F$; our methods treat it as a black box and preserve its favorable convergence properties.

Formally, we define the online recalibration task as a generalization of the classical online optimization framework \cite{shalev2007phd,cesabianchi2006prediction}. At every step $t=1,2,...$:
  \begin{algorithmic}[1]
    \STATE Nature reveals features $x_t \in \mathbb R^d$.
    \STATE Forecaster $F$ predicts $\palg_t = \sigma (w_{t-1} \cdot x_t) \in [0,1]$.
    \STATE A recalibration algorithm $A$ produces a calibrated probability $p_t = A(\palg_t) \in [0,1]$.
    \STATE Nature reveals label $y_t \in \{0, 1\}$; $F$ incurs loss of $\ell(y_t , p_t)$, where $\ell : [0,1] \times \{0, 1\} \to \Rb^+$  is convex in $p_t$ for all $y_t$.
    \STATE $F$ chooses $w_{t+1}$; $A$ updates itself based on $y_t$.
  \end{algorithmic}
Here, $\sigma$ is a transfer function chosen such that the task is convex in $w_t$.
In the medical diagnosis example, $x_t$ represents medical or genomic features for patient $t$; we use feature weights $w_t$ to predict the probability $\palg_t$ that the patient is ill; the true outcome is encoded by $y_t$. We would like $A$ to produce $\palg_t$ that are accurate and well-calibrated in the following sense.
\begin{defn}
We say that $A$ is an $(\e, \ell^\textrm{cal})$-accurate online recalibration algorithm for the loss $\ell^\textrm{acc}$ if (a) the forecasts $p_t = A(\palg_t)$ are $(\e, \ell^\textrm{cal})$-calibrated and (b) the regret of $p_t$ with respect to $\palg_t$ is a.s.~small in terms of $\ell^\textrm{acc}$:  
\begin{align}
\lim\sup_{T \to \infty} \frac{1}{T} \sum_{t=1}^T \left( \ell^\textrm{acc}(y_t , p_t) - \ell^\textrm{acc}(y_t, \palg_t)\right) \leq \epsilon.
\end{align}
\end{defn}

\section{Algorithms for Online Recalibration}\label{sec:framework}

\begin{figure}
\vspace{-3mm}
\begin{algorithm}[H]
  \caption{Online Recalibration}
  \label{algo:recal}
  \begin{algorithmic}[1]
    \REQUIRE Online calibration subroutine $\Fcal$ and number of buckets $M$
    \STATE Let $\Ic = \{[0,\frac{1}{M}), [\frac{1}{M}, \frac{2}{M}), ..., [\frac{M-1}{M},1]\}$ be a set of intervals that partition $[0,1]$.
    \STATE Let $\Fc = \{ \Fcal_j \mid j = 0,...,M-1 \}$ be a set of $M$ independent instances of $\Fcal.$
    \FOR {$t=1,2,...$:}
    \STATE Observe uncalibrated forecast $\palg_t$.
    \STATE Let $I_j \in \Ic$ be the interval containing $\palg_t$.
    \STATE Let $p_t$ be the forecast of $\Fcal_j$. 
    \STATE Output $p_t$. Observe $y_t$ and pass it to $\Fcal_j$.
    \ENDFOR
  \end{algorithmic}
\end{algorithm}
\vspace{-7mm}
\end{figure}

Next, we propose an algorithm for performing online probability recalibration; we refer to our approach as a meta-algorithm because it repeatedly invokes a regular online calibration algorithm as a black-box subroutine. \algorithmref{recal} outlines this procedure.

At a high level, \algorithmref{recal} partitions the uncalibrated forecasts $\palg_t$ into $M$ buckets/intervals $\Ic = \{[0,\frac{1}{M}), [\frac{1}{M}, \frac{2}{M}), ..., [\frac{M-1}{M},1]\}$; it trains an independent instance of $\Fcal$ on the data $\{\palg_t, y_t \mid \palg_t \in I_j \}$ belonging to each bucket $I_j \in \Ic$; at prediction time, it calls the instance of $\Fcal$ associated with the bucket of the uncalibrated forecast $\palg_t$.

\algorithmref{recal} works because a calibrated predictor is at least as accurate as any constant predictor; in particular, each subroutine $\Fcal_j$ is at least as accurate as the prediction $\frac{j}{M}$, which also happens to be approximately $\palg_t$ when $\Fcal_j$ was called. Thus, each $\Fcal_j$ is as accurate as its input sequence of $\palg_t$. One can then show that if each each $\Fcal_j$ is accurate and calibrated, then so it their aggrgate, \algorithmref{recal}. The rest of this section provides a formal version of this argument; due to space limitations, we defer most of our full proofs to the appendix.


\subsection{Calibration and Accuracy of Online Recalibration}


\paragraph{Notation.}

We define the calibration error of $\Fcal_j$ and of \algorithmref{recal} at $i/N$ as (respectively)
{\footnotesize
\begin{align*}
C^{(j)}_{T,i}  = \left| \rjt(i/N) - \frac{i}{N} \right|^p \left( \frac{1}{T_j} \sum_{t=1}^T \wsupj_{t,i} \right) \\ 
C_{T,i} = \left|  \rho_T(i/N) - \frac{i}{N} \right|^p \left( \frac{1}{T} \sum_{t=1}^T \Ind_{t,i} \right),
\end{align*}}

\noindent where $\Ind_{t,i} = \Ind\{p_t = i/N\}$. Terms marked with a $(j)$ denote the restriction of the usual definition to the input of subroutine $\Fcal_j$ (see the appendix for details).
We may write the calibration losses of $\Fcal_j$ and \algorithmref{recal} 
as $ C^{(j)}_{T} = \sum_{i=0}^N C^{(j)}_{T,i}$ and $ C_{T} = \sum_{i=0}^N C_{T,i}$.

\paragraph{Assumptions.}

In this section, we will assume that the subroutine $\Fcal$ used in \algorithmref{recal} is $(\e, \ell_1)$-calibrated and that $C^{(j)}_{T_j} \leq R_{T_j} + \e$ uniformly ($R_{T_j} = o(1)$ as $T_j \to \infty$; $T_j$ is the number of calls to instance $\Fcal_j$). This also implies $\ell_p$-calibration (by continuity of $\ell_p$), albeit with different rates $R_{T_j}$ and a different $\e$. \citeauthor{abernethy11blackwell} introduce $(\e, \ell_1)$-calibrated $F_j$. We also provide proofs for the $\ell_2$ loss in the appendix.

Crucially, we assume that the loss $\ell$ used for measuring accuracy is {\em proper} and bounded with $\ell(\cdot,i/N) < B$ for $i \in [N]_0$; since the set of predictions is finite, this is a mild requirement. Finally, we make additional continuity assumptions on $\ell$ in \lemmaref{accuracy}.

\paragraph{Recalibration with proper losses.}

Surprisingly, not every loss $\ell$ admits a recalibration procedure. Consider, for example, the following continuously repeating sequence $001001001...$ of $y_t$'s. A calibrated forecaster must converge to predicting $1/3$ (a constant prediction) with an $\ell_1$ loss of $\approx$0.44; however predicting $0$ for all $t$ has an $\ell_1$ loss of $1/3 < 0.44$. Thus we cannot recalibrate this sequence and also remain equally accurate under the $\ell_1$ loss. The same argument also applies to batch recalibration (e.g. Platt scaling): we only need to assume that $y_t \sim \mathrm{Ber}(1/3)$ i.i.d.

However, recalibration is possible for a very large class of {\em proper} losses. Establishing this fact will rely on the following key technical lemma.

\begin{lemma}\label{lem:noregret}
If $\ell$ is a proper loss bounded by $B>0$, then an $(\e, \ell_1)$-calibrated $\Fcal$
a.s.~has a small internal regret w.r.t.~$\ell$ and satisfies uniformly over time $T$ the bound
{\footnotesize
\begin{align*}
\intR_{T} & = \max_{ij} \sum_{t=1}^T \Ind_{p_t = i/N} \left( \ell(y_t, i/N)  - \ell(y_t, j/N) \right) \leq 2 B (R_T + \e).
\end{align*}
}
\end{lemma}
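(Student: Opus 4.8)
The plan is to regroup the internal-regret sum by the finitely many values $i/N$ that $\Fcal$ can output, rewrite each group as an expected proper loss under the \emph{realized} label frequency on the corresponding rounds, and then compare that against the loss one would have incurred had this realized frequency been exactly $i/N$. Concretely, for $i\in[N]_0$ let $n_i=\sum_{t=1}^T\Ind_{p_t=i/N}$ be the number of rounds on which $\Fcal$ predicted $i/N$, and let $\rho_i=\rho_T(i/N)$ be the fraction of those rounds with $y_t=1$ (rounds with $n_i=0$ contribute nothing and are ignored). Introducing the expected proper loss $L(\rho,q)=\rho\,\ell(1,q)+(1-\rho)\,\ell(0,q)=\Exp_{y\sim\mathrm{Ber}(\rho)}\ell(y,q)$, which is affine in $\rho$ for each fixed $q$, the inner sum collapses to
\begin{align*}
\sum_{t=1}^T\Ind_{p_t=i/N}\bigl(\ell(y_t,i/N)-\ell(y_t,j/N)\bigr)=n_i\bigl(L(\rho_i,i/N)-L(\rho_i,j/N)\bigr).
\end{align*}

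The first real step is the decomposition, valid because $L(\cdot,q)$ is affine,
\begin{align*}
L(\rho_i,i/N)-L(\rho_i,j/N) &= \bigl[L(i/N,i/N)-L(i/N,j/N)\bigr] \\
&\quad + (\rho_i-i/N)\bigl[\bigl(\ell(1,i/N)-\ell(0,i/N)\bigr)-\bigl(\ell(1,j/N)-\ell(0,j/N)\bigr)\bigr].
\end{align*}
The bracket on the first line is nonpositive because $\ell$ is proper, so $i/N\in\arg\min_q L(i/N,q)$ and hence $L(i/N,i/N)\le L(i/N,j/N)$; the bracket on the second line has magnitude at most $2B$, since each $\ell(\cdot,k/N)$ lies in $[0,B)$. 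Therefore $n_i\bigl(L(\rho_i,i/N)-L(\rho_i,j/N)\bigr)\le 2B\,n_i\,|\rho_i-i/N|$. To finish I would maximize over $i,j$ (the bound depends only on $i$), observe that $2B\,n_i|\rho_i-i/N|$ is a single nonnegative summand of $2B\sum_{k=0}^N n_k|\rho_k-k/N|=2B\,T\,C_T^1$, and invoke the uniform $(\e,\ell_1)$-calibration of $\Fcal$ (with rate $R_T$) assumed in this section, which gives $C_T^1\le R_T+\e$ a.s.\ for all $T$; this yields the stated bound (after normalizing by $T$), and the a.s.\ qualifier is inherited from that of $\Fcal$.

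The step I expect to be the crux is this decomposition. The tempting shortcut --- use properness to swap $j/N$ for the best response $\rho_i$ and then bound $L(\rho_i,i/N)-L(\rho_i,\rho_i)$ --- does not work here: $\ell$ is assumed bounded only at the grid points $k/N$, not at the off-grid value $\rho_i$, and in any case that gap is \emph{quadratic} in $|\rho_i-i/N|$ for losses such as $\ell_2$, which is too weak to be absorbed by an $\ell_1$ calibration error. Pivoting through $i/N$ in the \emph{frequency} slot (rather than through $\rho_i$ in the \emph{prediction} slot) is exactly what keeps properness applicable while leaving a residual that is linear in $|\rho_i-i/N|$ and hence controlled by $C_T^1$. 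A secondary point to handle carefully is simply reconciling the normalization between the per-round internal-regret sum and the time-averaged calibration error $C_T^1$.
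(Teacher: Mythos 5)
Your proof is correct and takes what appears to be the standard route for this kind of lemma: group the internal-regret sum by predicted value $i/N$ so it becomes $n_i\bigl(L(\rho_i,i/N)-L(\rho_i,j/N)\bigr)$ with $L(\rho,q)=\Exp_{y\sim\mathrm{Ber}(\rho)}\ell(y,q)$, then pivot through $i/N$ in the \emph{frequency} slot of the bilinear form $L$, so that properness annihilates the leading term via $L(i/N,i/N)-L(i/N,j/N)\le 0$ and leaves a residual that is linear in $|\rho_i-i/N|$ with constant $2B$, controlled by the $\ell_1$ calibration error. Your remark about why the ``tempting shortcut'' of substituting the minimizer $\rho_i$ in the prediction slot fails is well taken: that route would evaluate $\ell$ at the off-grid value $\rho_i$, where boundedness is not assumed, and would leave a Bregman-type gap $L(\rho_i,i/N)-L(\rho_i,\rho_i)$ that need not be linear in $|\rho_i-i/N|$ for general proper losses. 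One point worth making explicit rather than parenthetical: your inequality chain yields $\intR_T\le 2B\sum_i n_i|\rho_i-i/N|=2B\,T\,C_T^1\le 2B\,T(R_T+\e)$, so the lemma's displayed bound $2B(R_T+\e)$ should be read as a bound on the per-round internal regret $\intR_T/T$, consistent with the paper's use of the time-averaged calibration error $C_T^1$ and with the normalized form of Lemma~\ref{lem:accuracy}; the factor of $T$ is genuinely present, and silently dropping it would be a gap even though the intended meaning is clear from context.
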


According to \lemmaref{noregret}, if a set of predictions is calibrated, then we never want to retrospectively switch to predicting $p_2$ at times when we predicted $p_1$. Intuitively, this makes sense: if predictions are calibrated, then $p_1$ should minimize the total (or average) loss $\sum_{t : p_t=p_1 } \ell(y_t, p)$ over the times $t$ when $p_1$ was predicted (at least better so than $p_2$). However, our $\ell_1$ counter-example above shows that this intuition does not hold for every loss; we need to explicitly enforce our intuition, which amounts to assuming that $\ell$ is proper, i.e. that
$p \in \arg\min_q \Exp_{y \sim \text{Ber}(p)} \ell(y, q)$.

\paragraph{Accuracy and calibration.}

An important consequence of \lemmaref{noregret} is that a calibrated algorithm has vanishing regret relative to any fixed prediction (since minimizing internal regret also minimizes external regret). Using this fact, it becomes straightforward to establish that \algorithmref{recal} is at least as accurate as the baseline forecaster $F$.

\begin{lemma}[Recalibration preserves accuracy]\label{lem:accuracy}
Consider \algorithmref{recal} with
parameters $M \geq N > 1/\e$ and let $\ell$ be a bounded proper loss for which
\begin{enumerate}
\item $\ell(y_t, p) \leq \ell(y_t, j/M) + B/M$ for $p \in [j/M, (j+1)/M)$;
\item $\ell(y_t, p) \leq \ell(y_t, i/N) + B/N$ for $p \in [i/N, (i+1)/N)$;
\end{enumerate}
Then the recalibrated $p_t$ a.s.~have vanishing $\ell$-loss regret relative to $\palg_t$ and we have uniformly:
\begin{equation*}
\frac{1}{T} \sum_{t=1}^T \ell (y_t , p_t) - \frac{1}{T} \sum_{t=1}^T \ell(y_t , \palg_t)  < N B \sum_{j=1}^M \frac{T_j}{T} R_{T_j} + 3B\e. 
\end{equation*}
\end{lemma}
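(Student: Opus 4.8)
The plan is to bound the per-round loss difference by comparing the recalibrated forecasts $p_t$ of each subroutine $\Fcal_j$ against the constant prediction $j/M$, and then linking $j/M$ back to the uncalibrated $\palg_t$ that triggered a call to $\Fcal_j$. I would first decompose $\sum_{t=1}^T \ell(y_t,p_t) = \sum_{j=1}^M \sum_{t : \palg_t \in I_j} \ell(y_t, p_t)$, and similarly for $\palg_t$. Working within a fixed bucket $j$, the forecasts produced there are exactly the outputs of $\Fcal_j$, which is $(\e,\ell_1)$-calibrated with $C^{(j)}_{T_j} \le R_{T_j} + \e$. By \lemmaref{noregret} applied to $\Fcal_j$, its internal regret w.r.t.\ $\ell$ is at most $2B(R_{T_j}+\e)$, and since small internal regret implies small external regret, $\Fcal_j$ has vanishing $\ell$-regret against any fixed prediction — in particular against the constant $j/M$. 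This gives, over the $T_j$ rounds assigned to bucket $j$,
\begin{equation*}
\sum_{t : \palg_t \in I_j} \ell(y_t, p_t) \le \sum_{t : \palg_t \in I_j} \ell(y_t, j/M) + (\text{something like } N\cdot 2B(R_{T_j}+\e)),
\end{equation*}
where the factor involving $N$ arises because the internal regret bound is per-pair of the $N{+}1$ grid points and converting to external regret against an arbitrary prediction costs a factor proportional to the number of grid points (this is where the $N$ in the final bound comes from; I would need to track this constant carefully).

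Next I would use the two continuity hypotheses. Hypothesis~(1) says $\ell(y_t, j/M) \le \ell(y_t, p) + B/M$ for any $p \in I_j = [j/M,(j+1)/M)$; since $\palg_t \in I_j$ whenever round $t$ is routed to $\Fcal_j$, this yields $\ell(y_t, j/M) \le \ell(y_t,\palg_t) + B/M$. (Hypothesis~(2) is the analogous statement for the resolution grid $1/N$ and enters when converting the calibration/internal-regret statement on the discrete grid of $\Fcal_j$ — whose predictions live in $\{i/N\}$ — into a statement comparing against the continuous prediction $j/M$; roughly, the best grid point $i/N$ near $j/M$ is within $B/N$ of it.) Summing over the rounds in bucket $j$ and then over all $M$ buckets, the $B/M$ terms contribute $\sum_j (T_j/T)(B/M) \le B/M \le B\e$ (using $M \ge N > 1/\e$), the $B/N$ terms similarly contribute at most $B\e$, and the regret terms aggregate to $\sum_j (T_j/T)\, N\cdot 2B(R_{T_j}+\e) = NB\sum_j (T_j/T) R_{T_j} + 2NB\e \cdot (\text{normalized})$. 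Collecting everything and being slightly generous with constants gives the claimed bound $\tfrac1T\sum_t \ell(y_t,p_t) - \tfrac1T\sum_t \ell(y_t,\palg_t) < NB\sum_{j=1}^M \tfrac{T_j}{T} R_{T_j} + 3B\e$, where the $3B\e$ absorbs the $B/M$, $B/N$, and leftover $\e$ contributions.

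The main obstacle I anticipate is the bookkeeping around the factor $N$ and the exact form of the external-regret-from-internal-regret conversion: \lemmaref{noregret} is stated as an internal-regret bound over the $N{+}1$ experts $\{i/N\}$, and I need (a) that this implies external regret against the \emph{best} of those experts is $\le 2B(R_{T_j}+\e)$ possibly times the number of experts, and (b) that comparing against the off-grid constant $j/M$ only loses an extra $B/N$ per round via hypothesis~(2). Getting the constant in front of the regret term to come out as exactly $N$ (rather than $N{+}1$ or $2N$) may require either a slightly sharper version of the internal-to-external reduction or simply folding the slack into the $3B\e$ term. A secondary subtlety is that $T_j$ is itself adversarially determined and possibly small or even zero for some buckets; since $R_{T_j} = o(1)$ only as $T_j \to \infty$, one must either assume $R_{T_j} \le R_0$ is uniformly bounded (which the assumptions section grants: "$C^{(j)}_{T_j} \le R_{T_j}+\e$ uniformly") so that buckets with few calls contribute a vanishing $T_j/T$ weight, or handle them separately — in any case the weighted sum $\sum_j (T_j/T) R_{T_j}$ is exactly the quantity appearing in the statement, so no further work is needed there beyond noting $\sum_j T_j = T$.
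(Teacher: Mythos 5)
Your high-level plan is the same one the paper uses: decompose by bucket, invoke \lemmaref{noregret} to get vanishing regret for each $\Fcal_j$ against a fixed grid point, and use the continuity hypotheses to close the gap between that grid point and the original $\palg_t$ in the same bucket. That is the right skeleton, and your handling of the $B/M$ and $B/N$ terms (each contributing at most $B\e$ under $M\ge N>1/\e$) and of the potentially tiny buckets via the uniform bound on $C^{(j)}_{T_j}$ is fine.

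The one place where your bookkeeping genuinely would not close is the internal-to-external conversion. You propose bounding the external regret of $\Fcal_j$ against the constant grid point by roughly $(N{+}1)$ times the \emph{maximum} per-pair internal regret $2B(R_{T_j}+\e)$. The $N\cdot R_{T_j}$ piece of that is harmless (it matches the $NB\sum_j (T_j/T)R_{T_j}$ term, which vanishes), but it also produces an $O(NB\e)$ additive term, and you suggest ``folding the slack into the $3B\e$''. That cannot work: $NB\e$ is $N$ times too large. The resolution is that the max-over-pairs bound is too loose here; the external regret $\sum_{i}\sum_t \Ind_{p_t=i/N}\bigl(\ell(y_t,i/N)-\ell(y_t,j_0/N)\bigr)$ should be bounded term-by-term using the \emph{per-grid-point} calibration error $C^{(j)}_{T_j,i}$ (each term already carries the frequency weight $\sum_t\Ind_{p_t=i/N}$), and these sum over $i$ to the total $C^{(j)}_{T_j}\le R_{T_j}+\e$, giving external regret $\lesssim 2BT_j(R_{T_j}+\e)$ with \emph{no} extra factor of $N$ on the $\e$ part. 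You need that sharper form of \lemmaref{noregret}, not the ``max $\times$ number of experts'' reduction.

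One smaller note: you quote hypothesis~(1) as ``$\ell(y_t,j/M)\le\ell(y_t,p)+B/M$,'' which is the reverse of what the paper writes. Your reversed direction is in fact the one the argument needs (you must upper-bound the loss at the grid/bucket endpoint by the loss at $\palg_t$, not the other way around), so your reasoning is sound; the mismatch is with the paper's literal statement, which appears to have the inequality oriented the wrong way (or is implicitly two-sided/Lipschitz). Worth being explicit about which direction you are actually using and why.
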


\begin{proof}[Proof (sketch)]
When $p_t$ is the output of a given $F_j$, we have $\ell(y_t,p_t^F) \approx \ell(y_t,j/M)\approx \ell(y_t,i_j/M)$ 
(since $\palg_t$ is in the $j$-th bucket, and since $M \geq N$ is sufficiently high resolution).
Since $F_j$ is calibrated, \lemmaref{noregret} implies the $p_t$ have vanishing regret relative to the fixed prediction $i_j/N$; aggregating over $j$ yields our result.
\end{proof}

The assumptions of \lemmaref{accuracy} essentially require that $\ell$ be Lipschitz with constant $B$, which holds e.g.~for convex bounded losses that are studied in online learning. Our assumption is slightly more general since $\ell$ may also be discontinuous (like the misclassification loss).
When $\ell$ is unbounded (like the log-loss), its values at the baseline algorithm's predictions must be bounded away from infinity.

Next, we also establish that combining the predictions of each $\Fcal_j$ preserves their calibration.

\begin{lemma}[Preserving calibration]\label{lem:calibration}
If each $\Fcal_j$ is $(\e, \ell_p)$-calibrated,
then \algorithmref{recal} is also $(\e, \ell_p)$-calibrated and the bound $C_T \leq \sum_{j=1}^M \frac{T_j}{T} R_{T_j} + \e$ holds uniformly over $T$.
\end{lemma}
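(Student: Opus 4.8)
The plan is to reduce the calibration error of \algorithmref{recal} to a convex combination of the per-bucket calibration errors of the subroutines $\Fcal_j$, and then feed in the assumed rates. The first step is purely combinatorial bookkeeping. Since every $\Fcal_j$ forecasts on the grid $\{i/N : i\in[N]_0\}$, so does \algorithmref{recal}, and at each global time $t$ exactly one subroutine is invoked; let $\wsupj_{t,i}$ be the indicator that $\Fcal_j$ is the subroutine invoked at time $t$ and that its forecast equals $i/N$. Then $\Ind_{t,i} = \sum_{j=1}^M \wsupj_{t,i}$ and $y_t\Ind_{t,i} = \sum_{j=1}^M y_t\wsupj_{t,i}$. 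Summing over $t=1,\dots,T$ and dividing, $\rho_T(i/N)$ is exactly the convex combination $\sum_{j=1}^M \lambda^{(j)}_{T,i}\,\rjt(i/N)$ with weights $\lambda^{(j)}_{T,i} = \bigl(\sum_{t=1}^T\wsupj_{t,i}\bigr)/\bigl(\sum_{t=1}^T\Ind_{t,i}\bigr)$ summing to one; the degenerate case of an empty bin contributes $0$ to $C_{T,i}$ regardless, so it can be ignored.

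The second step pushes this through the $\ell_p$ penalty. Since $x\mapsto|x-i/N|^p$ is convex for $p\ge 1$, Jensen's inequality gives
\[
\left|\rho_T(i/N) - \tfrac{i}{N}\right|^p \left(\sum_{t=1}^T\Ind_{t,i}\right) \;\le\; \sum_{j=1}^M \left|\rjt(i/N) - \tfrac{i}{N}\right|^p \left(\sum_{t=1}^T\wsupj_{t,i}\right).
\]
Dividing by $T$ and recognizing that the $j$-th summand on the right equals $T_j\,C^{(j)}_{T_j,i}$ (by the definition of $C^{(j)}_{T,i}$, since $\Fcal_j$ has been called $T_j$ times by global time $T$), we obtain $C_{T,i} \le \sum_{j=1}^M (T_j/T)\,C^{(j)}_{T_j,i}$. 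Summing over $i\in[N]_0$ yields the bucketwise bound $C_T \le \sum_{j=1}^M (T_j/T)\,C^{(j)}_{T_j}$.

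The third step invokes the assumptions: using $C^{(j)}_{T_j}\le R_{T_j}+\e$ uniformly and $\sum_{j=1}^M T_j = T$,
\[
C_T \;\le\; \sum_{j=1}^M \frac{T_j}{T}\bigl(R_{T_j}+\e\bigr) \;=\; \sum_{j=1}^M \frac{T_j}{T}\,R_{T_j} \;+\; \e,
\]
which is the claimed uniform bound. For $(\e,\ell_p)$-calibration, note that $M$ is a fixed finite constant: for each $j$, either $\Fcal_j$ is called only finitely often, so $T_j/T\to 0$ while $R_{T_j}$ stays bounded, or $T_j\to\infty$, so $R_{T_j}\to 0$ while $T_j/T\le 1$; in both cases $(T_j/T)R_{T_j}\to 0$, so the finite sum vanishes and $\limsup_T C_T \le \e$ a.s.

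I expect the only genuine subtlety to be the last, measure-theoretic point: each subroutine $\Fcal_j$ receives a subsequence of the stream selected by a possibly adaptive adversary and interleaved with the randomness of the other subroutines, so one must check that its a.s.\ calibration guarantee still applies when its local horizon $T_j$ is itself random. This is fine because the classical online-calibration guarantee holds against arbitrary adaptive adversaries and for every (random) local time, and a finite union over $j$ of a.s.\ events is a.s.; everything else is convexity plus the two counting identities. The same argument goes through verbatim for $\ell_1$, which is the case actually supplied by the cited subroutines, taking $p=1$ in the displays above.
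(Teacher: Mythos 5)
Your proof is correct and takes the natural (and almost certainly the paper's own) route: decompose $\Ind_{t,i}=\sum_j\wsupj_{t,i}$, write $\rho_T(i/N)$ as the induced convex combination of $\rjt(i/N)$, push through $|\cdot-i/N|^p$ by Jensen for $p\ge 1$ to get $C_{T,i}\le\sum_j(T_j/T)\,C^{(j)}_{T_j,i}$, sum over $i$, and apply the uniform bound $C^{(j)}_{T_j}\le R_{T_j}+\e$ together with $\sum_j T_j=T$. Your closing observations — that the finitely many terms $(T_j/T)R_{T_j}$ each vanish as $T\to\infty$ regardless of whether $T_j$ stays bounded or diverges, and that the a.s.\ calibration of each $\Fcal_j$ against an adaptive adversary survives the interleaving and yields an a.s.\ statement for the finite union — are exactly the two places one needs to be careful, and you handle both.
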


These two lemmas lead to our main claim: that \algorithmref{recal} solves the online recalibration problem.

\begin{theorem}\label{thm:main}
Let $\Fcal$ be an $(\ell_1, \epsilon/3B)$-calibrated online subroutine with resolution $N \geq 3B/\epsilon$. 
and let $\ell$ be a proper loss satisfying the assumptions of \lemmaref{accuracy}. Then \algorithmref{recal} with parameters $\Fcal$ and $M=N$ is an $\epsilon$-accurate online recalibration algorithm for the loss $\ell$.
\end{theorem}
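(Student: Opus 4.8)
The plan is to combine the two preceding lemmas. We need to check two things: (a) the forecasts $p_t$ output by \algorithmref{recal} are $(\e, \ell^\textrm{cal})$-calibrated, where here $\ell^\textrm{cal}$ is $\ell_1$ (or $\ell_p$), and (b) the average $\ell$-loss regret of $p_t$ relative to $\palg_t$ is at most $\e$ in the limit. Part (a) is immediate from \lemmaref{calibration}: since we run \algorithmref{recal} with subroutines $\Fcal_j$ that are each $(\e/3B, \ell_1)$-calibrated (hence in particular $(\e, \ell_1)$-calibrated since $\e/3B \le \e$ for $B \ge 1/3$, or more carefully we just apply the lemma with the subroutine's $\e$), \lemmaref{calibration} gives $C_T \le \sum_{j=1}^M \frac{T_j}{T} R_{T_j} + \e/3B$ uniformly in $T$; since $R_{T_j} = o(1)$ and $\sum_j T_j = T$, a short argument (split the buckets into those with $T_j$ large and those with $T_j$ small) shows $\sum_j \frac{T_j}{T} R_{T_j} \to 0$ as $T \to \infty$, so $\lim\sup_T C_T \le \e/3B \le \e$ a.s.

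For part (b), apply \lemmaref{accuracy} with the choice $M = N \ge 3B/\e$. Its hypotheses are exactly the two Lipschitz-type conditions assumed on $\ell$ in the statement of \lemmaref{accuracy}, which we are told $\ell$ satisfies, and the condition $M \ge N > 1/\e$ holds since $N \ge 3B/\e > 1/\e$ (using $B \ge 1/3$, which follows from $\ell$ being bounded by $B$ and nontrivial — or one simply absorbs constants). \lemmaref{accuracy} then yields, uniformly over $T$,
\begin{equation*}
\frac{1}{T}\sum_{t=1}^T \ell(y_t, p_t) - \frac{1}{T}\sum_{t=1}^T \ell(y_t, \palg_t) < N B \sum_{j=1}^M \frac{T_j}{T} R_{T_j} + 3B\e'.
\end{equation*}
Here $\e'$ is the subroutine's calibration parameter; with $\e' = \e/3B$ the additive term $3B\e'$ becomes $\e$. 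Again the term $NB \sum_j \frac{T_j}{T} R_{T_j}$ vanishes as $T \to \infty$ ($N$ is a fixed constant and $\sum_j \frac{T_j}{T} R_{T_j} \to 0$ by the same splitting argument as above), so $\lim\sup_T \frac1T \sum_t (\ell(y_t,p_t) - \ell(y_t,\palg_t)) \le \e$ a.s. Together, (a) and (b) are precisely the definition of an $\e$-accurate online recalibration algorithm for $\ell$.

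The only real work is bookkeeping the constants: matching the "$\e/3B$" subroutine accuracy and the "$N \ge 3B/\e$" resolution to the "$3B\e'$" and "$NB(\cdots)$" terms appearing in \lemmaref{accuracy} and \lemmaref{calibration}, and confirming that the vanishing-residual term $\sum_j \frac{T_j}{T} R_{T_j} = o(1)$ — I would state this as a small auxiliary fact: if $R_m = o(1)$ then for any partition $T = \sum_j T_j$ into $M$ (fixed) parts, $\sum_j \frac{T_j}{T} R_{T_j} \to 0$, since for each $j$ either $T_j$ is bounded (so $\frac{T_j}{T} \to 0$) or $T_j \to \infty$ (so $R_{T_j} \to 0$), and there are finitely many $j$. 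I expect the main (very mild) obstacle to be exactly this constant-chasing and making sure the "uniform in $T$" bounds from the lemmas are applied before taking $\lim\sup$; there is no deep step, since \lemmaref{accuracy} (itself resting on \lemmaref{noregret}) and \lemmaref{calibration} do all the heavy lifting.
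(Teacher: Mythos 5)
Your proof follows the same route as the paper's own: both parts invoke the calibration-preservation lemma and the accuracy-preservation lemma with the subroutine's calibration tolerance set to $\epsilon/3B$ and resolution $N \geq 3B/\epsilon$, then let $T \to \infty$ so the $\sum_j (T_j/T)R_{T_j}$ residual vanishes, leaving the $\epsilon$-level bounds required by the definition. Your explicit argument that $\sum_j (T_j/T)R_{T_j} \to 0$ (splitting buckets by whether $T_j$ stays bounded or diverges) is a correct elaboration of a step the paper's terse two-line proof leaves implicit; otherwise the constant-chasing matches.
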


\begin{proof}
By \lemmaref{calibration}, \algorithmref{recal} is $(\ell_1, \e/3B)$-calibrated and by \lemmaref{accuracy}, its regret w.r.t. the raw $\palg_t$ tends to $< 3B/N < \e$. Hence, \theoremref{main} follows.
\end{proof}

In the appendix, we provide a detailed argument for how $\ell$ can be chosen to be the misclassificaiton loss.

Interestingly, it also turns out that if $\ell$ is not a proper loss, then recalibration is not possible for some $\e > 0$.

\begin{theorem}
If $\ell$ is not proper, then no algorithm achieves recalibration w.r.t.~$\ell$ for all $\e > 0$.
\end{theorem}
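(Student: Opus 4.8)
The plan is to adapt the counterexample already sketched in the paper (the repeating sequence $001001001\ldots$) into a general lower bound. The key observation is that the paper's own informal argument against recalibrating under $\ell_1$ relied on exactly one feature of $\ell_1$: that the prediction $1/3$ — the empirical frequency of the sequence $\mathrm{Ber}(1/3)$ — does \emph{not} minimize $\mathbb{E}_{y\sim\mathrm{Ber}(1/3)}\,\ell(y,q)$. That is precisely the negation of properness at the point $p=1/3$. So the proof should run: if $\ell$ is not proper, then by definition there exists some $p^\ast \in [0,1]$ and some $q^\ast$ with $\mathbb{E}_{y\sim\mathrm{Ber}(p^\ast)}\,\ell(y,q^\ast) < \mathbb{E}_{y\sim\mathrm{Ber}(p^\ast)}\,\ell(y,p^\ast)$; call the gap $\delta = \mathbb{E}_{y\sim\mathrm{Ber}(p^\ast)}[\ell(y,p^\ast) - \ell(y,q^\ast)] > 0$.

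\textbf{Construction of the adversarial instance.} First I would have Nature pick any baseline forecaster $F$ (the adversary controls $x_t$, so $\palg_t$ can be driven to any fixed value — say a constant $\palg_t = c$, or whatever is convenient) and simply reveal $y_t \sim \mathrm{Ber}(p^\ast)$ i.i.d., independent of everything. Since the $y_t$ are i.i.d.\ and the recalibration algorithm $A$ is calibrated, the subsequence of rounds on which $A$ predicts any fixed value $v$ must have $y$-frequency converging to $v$; combined with the fact that the calibration error $C_T$ is $o(1)$, the time-average loss $\frac1T\sum_t \ell(y_t,p_t)$ of any $(\e,\ell_1)$-calibrated $A$ is a.s.\ at least $\mathbb{E}_{y\sim\mathrm{Ber}(p^\ast)}\ell(y,p^\ast) - O(\e)$ — because making the empirical $y$-frequency in each prediction bin match the predicted value forces, up to $\e$, the loss of a forecaster that ``plays the true frequency $p^\ast$ everywhere.'' Meanwhile the fixed expert who always predicts $q^\ast$ (which the adversary can also feed as $\palg_t$, or we just compare against the constant forecaster $q^\ast$ directly) achieves time-average loss converging a.s.\ to $\mathbb{E}_{y\sim\mathrm{Ber}(p^\ast)}\,\ell(y,q^\ast)$ by the strong law of large numbers. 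Hence the accuracy regret of $A$ relative to this baseline is a.s.\ at least $\delta - O(\e)$, which exceeds the target $\e$ once $\e$ is small enough relative to $\delta$. Thus no single algorithm can be $(\e,\ell^{\mathrm{cal}})$-accurate for \emph{all} $\e>0$.

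\textbf{The main obstacle} is making rigorous the step ``calibrated $\Rightarrow$ time-average loss $\geq \mathbb{E}_{y\sim\mathrm{Ber}(p^\ast)}\ell(y,p^\ast) - O(\e)$.'' The clean way is to write $\frac1T\sum_t \ell(y_t,p_t) = \sum_{i=0}^N \bigl(\frac1T\sum_t \Ind_{p_t=i/N}\bigr)\bigl(\frac1T\sum_{t:p_t=i/N}\ell(y_t,i/N)\bigr)$ and, within each bin $i$ that is used a non-vanishing fraction of the time, replace the empirical average of $\ell(y_t,i/N)$ over that bin by $\rho_T(i/N)\,\ell(1,i/N) + (1-\rho_T(i/N))\,\ell(0,i/N)$ — which is \emph{exact}, since $\ell(y,i/N)$ depends on $y$ only through its two possible values — and then use $|\rho_T(i/N)-i/N|\le\e'$ (calibration, valid in the weighted sense, hence on every non-negligible bin) plus boundedness of $\ell$ by $B$ to conclude each bin's average loss is within $2B\e'$ of $\mathbb{E}_{y\sim\mathrm{Ber}(i/N)}\ell(y,i/N) \geq \mathbb{E}_{y\sim\mathrm{Ber}(i/N)}\ell(y,q^\ast)$; here I also need the a.s.\ convergence $\rho_T(i/N)\to p^\ast$ for bins used infinitely often (this is where i.i.d.-ness of $y_t$ and a martingale/SLLN argument enters, exactly as in classical calibration analyses). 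One must also handle degenerate bins carefully — bins used only $o(T)$ times contribute $o(1)$ to the average and can be discarded — and quantify how small $\e$ must be (roughly $\e < \delta/(c\,B)$ for a small universal constant $c$) so that the regret lower bound $\delta - O(\e)$ beats $\e$. With those pieces in place the theorem follows; the rest is bookkeeping essentially identical to the $\ell_1$ discussion already given in the text.
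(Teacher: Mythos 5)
Your proposal is essentially the same argument the paper has in mind: it explicitly describes its proof as ``a slight generalization of the counter-example provided for the $\ell_1$ loss,'' i.e.\ take a $p^\ast$ at which properness fails, feed the algorithm $y_t\sim\mathrm{Ber}(p^\ast)$ (or a deterministic surrogate) with the baseline pinned to $q^\ast$, note that calibration forces the forecaster's predictions to concentrate near $p^\ast$ so its long-run loss is close to $\mathbb{E}_{\mathrm{Ber}(p^\ast)}\ell(y,p^\ast)$, and conclude a regret of at least $\delta-O(\e)$, which violates $\e$-accuracy once $\e$ is small. That is exactly your construction, so the approach matches.

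One local slip worth flagging: the inequality $\mathbb{E}_{y\sim\mathrm{Ber}(i/N)}\ell(y,i/N)\ge\mathbb{E}_{y\sim\mathrm{Ber}(i/N)}\ell(y,q^\ast)$ is not justified as written — the properness failure is assumed only at $p^\ast$, not at an arbitrary grid point $i/N$ (indeed, if $\ell$ were proper at $i/N$, the inequality would point the other way). The fix is already latent in your own sketch: the martingale SLLN gives $\rho_T(i/N)\to p^\ast$ on heavily used bins, and calibration gives $|\rho_T(i/N)-i/N|$ small on those bins, so the bins carrying non-vanishing weight all have $i/N\approx p^\ast$; you should therefore compare to $\mathbb{E}_{\mathrm{Ber}(p^\ast)}\ell(y,p^\ast)$ (using continuity/Lipschitzness of $\ell$ near $p^\ast$, as the paper implicitly assumes for its accuracy lemma), not to an ``improper at $i/N$'' statement. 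With that correction the proof goes through and matches the paper's intent.
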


The proof of this algorithm is a slight generalization of the counter-example provided for the $\ell_1$ loss. Interestingly, it holds equally for online and batch settings. To our knowledge, it is one of the first characterizations of the limitations of recalibration algorithms.

\paragraph{Convergence rates.}

  \begin{figure}
  \vspace{-6mm}
\begin{table}[H]
\begin{center}
  \def\arraystretch{1.5}
  \begin{tabular}{c | c | c}
Subroutine & {\footnotesize Regret Minimization} & {\footnotesize Blackwell Approchability} \\
\hline
Time {\scriptsize / step} & $O({1}/{\e})$ & $O(\log({1}/{\e}))$ \\
Space {\scriptsize / step} & $O({1}/{\e^2})$ & $O({1}/{\e^2})$ \\
Calibration & $O({1}/{\e \sqrt{\e T}})$ & $O({1}/{\e \sqrt{T}})$ \\
Advantage & Simplicity & Efficiency
\end{tabular}
\end{center}
\vspace{-2mm}
\caption{\footnotesize\label{tbl:rates}Time and space complexity and convergence rate of \algorithmref{recal} using different subroutines.}
\end{table}
    \vspace{-8mm}
  \end{figure}

Next, we are interested in the rate of convergence $R_T$ of the calibration error $C_T$ of \algorithmref{recal}.
For most online calibration subroutines $\Fcal$,
$R_{T} \leq f(\e)/\sqrt{T}$ for some $f(\e)$.
In such cases, we can further bound the calibration error in \lemmaref{calibration} as
$
\sum_{j=1}^M \frac{T_j}{T} R_{T_j} \leq \sum_{j=1}^M \frac{\sqrt{T_j}f(\e)}{T} \leq \frac{f(\e)}{\sqrt{ \e T}}. 
$
In the second inequality, we set the $T_j$ to be equal. 

Thus, our recalibration procedure introduces an overhead of
$ \frac{1}{\sqrt{\e}} $
in the convergence rate of the calibration error $C_T$ and of the regret in \lemmaref{accuracy}.
In addition, \algorithmref{recal} requires $ \frac{1}{{\e}} $ times more memory (we run $1/\e$ instances of $\Fcal_j$), but has the same per-iteration runtime (we activate one $\Fcal_j$ per step).
\tableref{rates} summarizes the convergence rates of \algorithmref{recal} when the subroutine is either the method of \citeauthor{abernethy11blackwell} based on Blackwell approachability or the simpler but slower approach based on internal regret minimization \cite{mannor2010calibration}.

\paragraph{Multiclass prediction.}
In the multiclass setting, we seek a recalibrator $A : \Delta_{K-1} \to \Delta_{K-1}$ producing calibrated probabilities $p_t \in \Delta_{K-1}$ that target class labels $y_t \in \{1,2,...,K\}$.
In analogy to binary recalibration, we may discretize the input space $\Delta_{K-1}$ into a $K$-dimensional grid
and train a classical multi-class calibration algorithm $\Fcal$ \cite{cesabianchi2006prediction} on each subset of $\palg_t$ associated with a cell. 
Just like in the binary setting, a classical calibration method $\Fcal_j$ predicts calibrated $p_t \in \Delta_{K-1}$ based solely on past multiclass labels $y_1,y_2,...,y_{t-1}$; it can serve as a subroutine within \algorithmref{recal}.

However, in the multi-class setting, this construction will require $O(1/\e^K)$ running time per iteration, $O(1/\e^{2K})$ memory, and will have a convergence rate of $O(1/(\e^{2K} \sqrt{T}))$. The exponential dependence on $K$ cannot be avoided, since the calibration problem is fundamentally PPAD-hard \cite{hazan2012calibration}. 
However, there may exist practical workarounds inspired by popular heuristics for the batch setting, such as one-vs-all classification \cite{zadrozny2002transforming}.

  
\begin{figure}
\vspace{-2mm}
\hspace{-5mm}
\includegraphics[width=9.2cm]{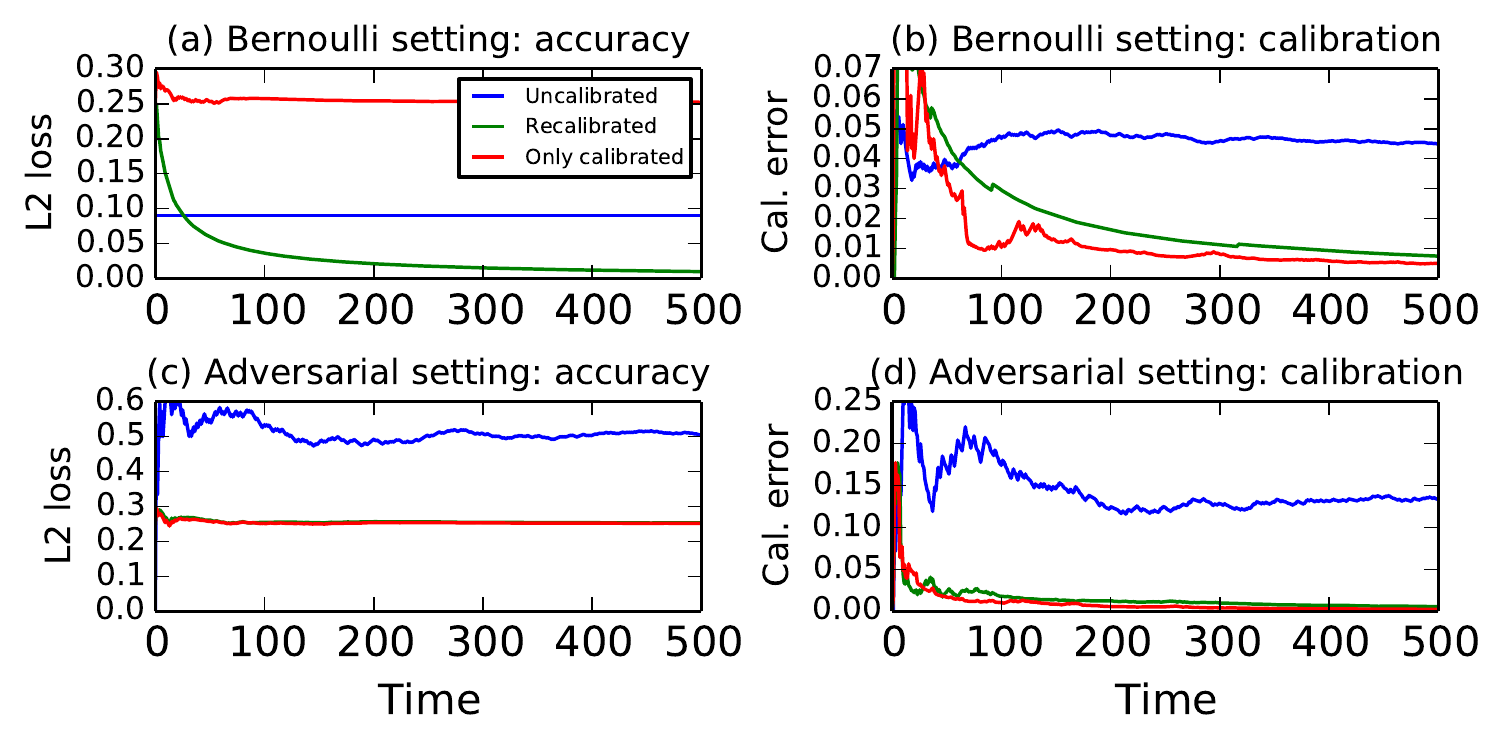}\vspace{-1mm}
\caption{\footnotesize \label{fig:synth}We compare predictions from an uncalibrated expert $F$ (blue), \algorithmref{recal} (green), and \regmin~(red) on sequences $y_t \sim \textrm{Ber}(0.5)$ (plots a, b) and on adversarially chosen $y_t$ (plots c, d). 
}
\vspace{-3mm}
\end{figure}
  
\section{Experiments}\label{sec:experiments}

We now proceed to study \algorithmref{recal} empirically. \algorithmref{recal}'s subroutine is the standard internal regret minimization approach of \citeauthor{cesabianchi2006prediction} ("\regmin"). We measure calibration and accuracy in the $\ell_2$ norm.

\paragraph{Predicting a Bernoulli sequence.}

We start with a simple setting where we observe an i.i.d. sequence of $y_t \sim \textrm{Ber}(p)$ as well as uncalibrated predictions $(\palg_t)_{t=1}^T$ that equal $0.3$ whenever $y_t=0$ and $0.7$ when $y_t=1$. The forecaster $F$ is essentially a perfect predictor, but is not calibrated.

In \figureref{synth}, we compare the performance of \regmin~(which does not observe $\palg_t$) to \algorithmref{recal} and to the uncalibrated predictor $F$. Both methods achieve low calibration error after about 300 observations, while the expert is clearly uncalibrated (\figureref{synth}b); however, \regmin~ is a terrible predictor: it always forecasts $p_t = 0.5$ and therefore has high $\ell_2$ loss (\figureref{synth}a). \algorithmref{recal}, on the other hand, makes perfect predictions by recalibrating the input $\palg_t$.

\paragraph{Prediction against an adversary.}

Next, we test the ability of our method to achieve calibration on adversarial input. At each step $t$, we choose $y_t = 0$ if $p_t > 0.5$ and $y_t = 1$ otherwise; we sample $\palg_t \sim \textrm{Ber}(0.5)$, which is essentially a form of noise. In \figureref{synth} (c, d), we see that \algorithmref{recal} successfully ignores the noisy forecaster $F$ and instead quickly converges to making calibrated (albeit not very accurate) predictions (it reduces to \regmin).

\paragraph{Natural language understanding.}


We used \algorithmref{recal} to recalibrate a state-of-the-art question answering system \cite{berant2014semantic} on the popular Free917 dataset (641 training, 276 testing examples). We trained the system on the training set as described in \cite{berant2013semantic} and then calibrated probabilities using \algorithmref{recal} in one pass over first the training, and then the testing examples. This setup emulates a pre-trained system that further improves itself from user feedback.

\begin{figure*}[t]
\vspace{-2mm}
    \centering
    \begin{subfigure}
{        \centering
	\includegraphics[width=8.5cm]{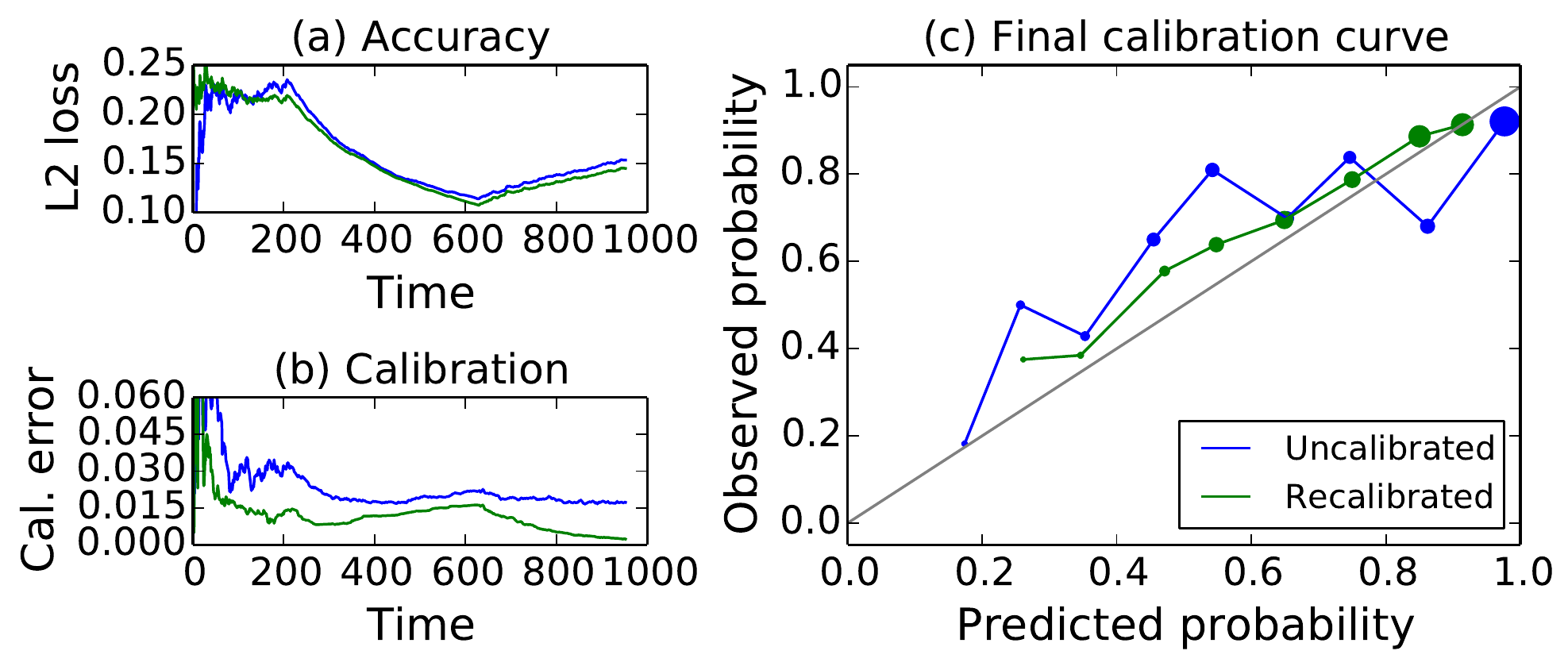}}
    \end{subfigure}\hspace{3mm}
    \begin{subfigure}
{        \centering
\includegraphics[width=8.5cm]{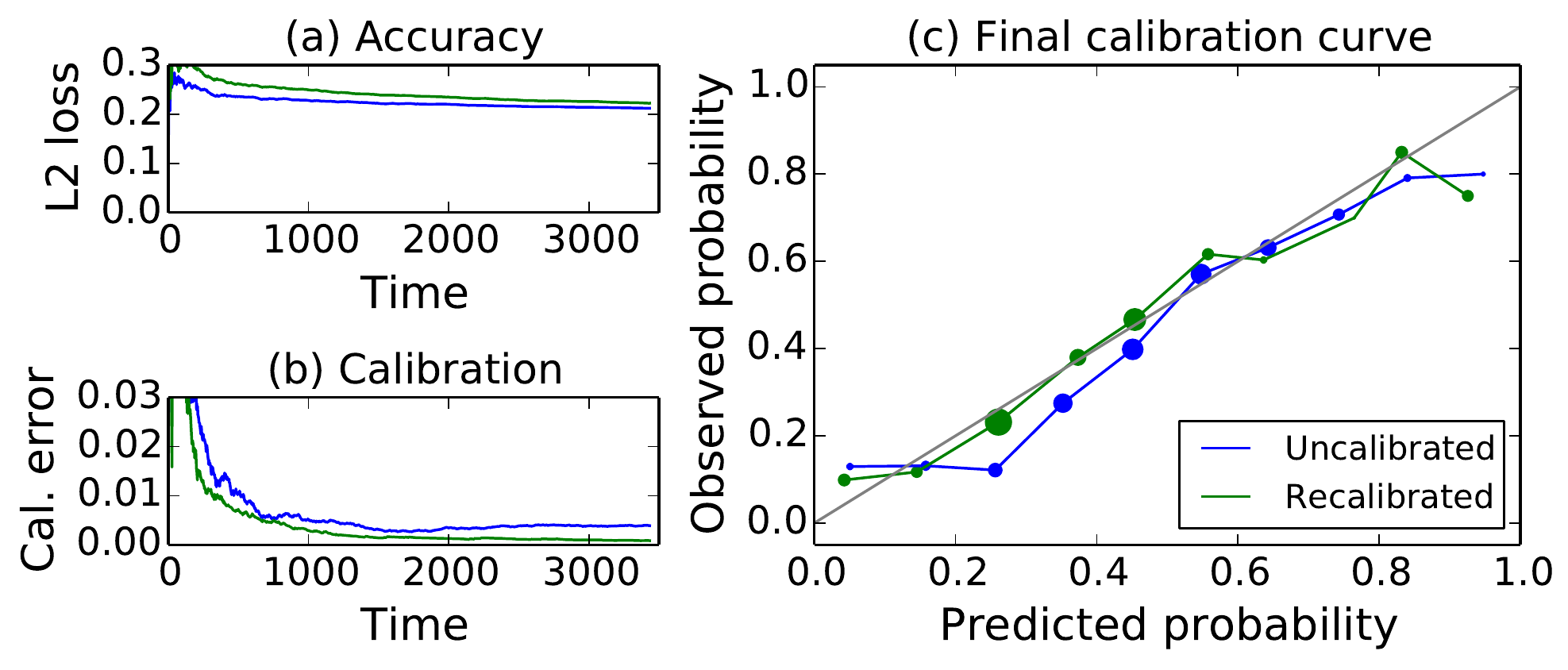}}
    \end{subfigure}\vspace{-1mm}
        \caption{\footnotesize \label{fig:semparse}\algorithmref{recal} (green) is used to recalibrate probabilities from a question answering system (left) and a medical diagnosis system (right; both in blue). We track prediction (a) and calibration error (b) over time; plot (c) displays calibration curves after seeing all the data; circle sizes are proportional to the number of predictions in the corresponding bucket.}
\vspace{-2mm}
\end{figure*}


\figureref{semparse} (left) compares our predicted $p_t$ to the raw system probabilities $\palg_t$ via {\em calibration curves}. Given pairs of predictions and outcomes $p_t, y_t$, we compute for each of $N$ buckets $B \in \{[\frac{i}{N},\frac{i+1}{N}) \mid 0 \leq i \leq 1\}$, averages $\bar p_B = \sum_{t:p_t \in B} p_t /N_B$ and $\bar y_B = \sum_{t:p_t \in B} y_t /N_B$, where $N_B = |\{p_t \in B\}|$. A calibration curve plots the $\bar y_B$ as a function of $\bar p_B$; perfect calibration corresponds to a straight line.

Calibration curves indicate that the $\palg_t$ are poorly calibrated in buckets below 0.9, while \algorithmref{recal} fares better. \figureref{semparse}a confirms that our accuracy (measured by the $\ell_2$ loss) tracks the baseline forecaster.

\paragraph{Medical diagnosis.}

Our last task is predicting the risk of type 1 diabetes from genomic data.
We use genotypes of 3,443 subjects (1,963 cases, 1,480 controls) over 447,221 SNPs \cite{wellcome2007genome}, with alleles encoded as $0,1,2$ (major, heterozygous and minor homozygous resp.). We use an online $\ell_1$-regularized linear support vector machine (SVM) to predict outcomes one patient at a time, and report performance for each $t \in [T]$. Uncalibrated probabilities are normalized raw SVM scores $s_t$, i.e. $\palg_t = (s_t + m_t)/2m_t$, where $m_t = \max_{1 \leq r \leq t} |s_r|$.

\figureref{semparse} (right) measures calibration after observing all the data. Raw scores are not well-calibrated outside of the interval $[0.4, 0.6]$; recalibration makes them almost perfectly calibrated. \figureref{semparse} further shows that the calibration error of \algorithmref{recal} is consistently lower throughout the entire learning process, while accuracy approaches to within $0.01$ of that of $\palg_t$. 


\section{Previous Work}
Calibrated probabilities are widely used as confidence measures in the context of binary classification.
Such probabilities are obtained via recalibration methods, of which Platt scaling \cite{platt1999probabilistic} and isotonic regression \cite{niculescu2005predicting} are by far the most popular. Recalibration methods also possess multiclass extensions, which typically involve training multiple one-vs-all predictors \cite{zadrozny2002transforming}, as well as extensions to ranking losses \cite{menon2012ranking}, combinations of estimators \cite{zhong2013accurate}, and structured prediction \cite{kuleshov2015calibrated}. 

In the online setting, the calibration problem was formalized by \citeauthor{dawid1982well}; online calibration techniques were first proposed by \citeauthor{foster98asymptoticcalibration}. Existing algorithms are based on internal regret minimization \cite{cesabianchi2006prediction} or on Blackwell approachability \cite{foster1997proof}; recently, these approaches were shown to be closely related \cite{abernethy11blackwell,mannor2010calibration}. Recent work has shown that online calibration is PPAD-hard \cite{hazan2012calibration}.

The concepts of calibration and sharpness were first formalized in the statistics literature \cite{murphy1973vector,gneiting2007probabilistic}. These metrics are captured by a class of {\em proper} losses and can be used both for evaluating \cite{buja05lossfunctions,brocker2009decomposition} and constructing \cite{kuleshov2015calibrated} calibrated forecasts.

\section{Discussion and Conclusion}\label{sec:discussion}

\paragraph{Online vs batch.}

\algorithmref{recal} can be understood as a direct analogue of a simple density estimation technique called the histogram method. This technique divides the $\palg_t$ into $N$ bins and estimates the average $y$ in each bin. By the i.i.d.~assumption, output probabilities will be calibrated; sharpness will be determined by the bin width.
%
Note that by Hoeffding's inequality, the average in a given bin with converge at a rate of $O({1}/{\sqrt{T_j}})$ \cite{devroye1996probabilistic}. 
This is faster than the $O({1}/{\sqrt{\e T_j}})$ rate of \citeauthor{abernethy11blackwell} and suggests that calibration is more challenging in the online setting. 

\paragraph{Checking rules.} 
An alternative way to avoid uninformative predictions (e.g.~ 0.5 on 010101...) is via the framework of {\em checking rules} \cite{cesabianchi2006prediction}. However, these rules must be specified in advance (e.g.~ the pattern 010101 must be known) and this framework does not explicitly admit covariates $x_t$. Our approach on the other hand recalibrates any $x_t, y_t$ in a black-box manner.

\paragraph{Defensive forecasting.} 

\citeauthor{vovk2005defensive} developed simultaneously calibrated and accurate online learning methods under the notion of {\em weak} calibration \cite{abernethy2011efficient}. We use strong calibration, which implies weak, although it requires different (e.g. randomized) algorithms. Vovk et al.~also use a different notion of precision; their algorithm ensures a small difference between average predicted $p_t$ and true $y_t$ at times $t$ when $p_t \approx p^*$ and $x_t \approx x^*$, for any $p^*, x^*$. The relation $\approx$ is determined by a user-specified kernel (over e.g. sentences or genomes $x_t$). Our approach, on the other hand, does not require specifying a kernel, and matches the accuracy of any given baseline forecaster; this may be simpler in some settings. 
Interestingly, we arrive at the same rates of convergence under different assumptions.

\paragraph{Conclusion.}


Current recalibration techniques implicitly require that the data is distributed i.i.d., which potentially makes them unreliable when this assumption does not hold. 
In this work, we introduced the first recalibration technique that provably recalibrates any existing forecaster with a vanishingly small degradation in accuracy. This method does not make i.i.d.~assumptions, and is provably calibrated even on adversarial input. We analyzed our method's theoretical properties and showed excellent empirical performance on several real-world benchmarks, where the method converges quickly and retains good accuracy.

\paragraph{Acknowledgements.} This work is supported by the NSF (grant \#1649208) and by the Future of Life Institute (grant 2016-158687).

\newpage

{
\bibliographystyle{aaai}
\bibliography{all}
}

%
%

\end{document}